% Template for ICASSP-2020 paper; to be used with:
%          spconf.sty  - ICASSP/ICIP LaTeX style file, and
%          IEEEbib.bst - IEEE bibliography style file.
% --------------------------------------------------------------------------
\documentclass{article}
\usepackage{spconf,amsmath,graphicx}

% Example definitions.
% --------------------

\def\L{{\cal L}}

%added packages that are not required by AAAI starts
\usepackage{mathtools}
\usepackage{amsmath,amsopn,amssymb}
\usepackage{amsthm} % for theorems
\usepackage{graphicx,xspace,color,soul}
\usepackage{epsfig,subfigure}
\usepackage{longtable,multirow}
\usepackage{array,float}
\usepackage{algorithmic}
\usepackage[linesnumbered, ruled]{algorithm2e}
\usepackage{bm,epstopdf}
\usepackage{tabularx}

\newtheorem{theorem}{Theorem}
\newtheorem{definition}{Definition}
\newtheorem{lemma}{Lemma}

\newtheorem{corollary}{Corollary}
\renewcommand{\vec}[1]{\mathbf{#1}}

\hyphenation{op-tical net-works semi-conduc-tor}

\newcolumntype{Y}{>{\centering\arraybackslash}X}

\def\eg{\textit{e.g.}}
\def\ie{\textit{i.e.}}
\def\0{{\mathbf 0}}
\def\1{{\mathbf 1}}

\def\f{{\mathbf f}}

\def\v{{\mathbf v}}
\def\z{{\mathbf z}}

\def\B{{\mathbf B}}

\def\D{\mathbf{D}}

\def\I{{\mathbf I}}
\def\L{{\mathbf L}}
\def\M{{\mathbf M}}
\def\Q{{\mathbf Q}}
\def\S{{\mathbf S}}

\def\W{{\mathbf W}}

\def\balpha{{\boldsymbol \alpha}}

\def\cS{{\mathcal S}}
\def\cG{{\mathcal G}}

%added packages that are not required by AAAI ends

% Title.
% ------
\title{Graph Metric Learning via Gershgorin Disc Alignment}
%
% Single address.
% ---------------
\name{Cheng Yang$^\dagger$, Gene Cheung$^\dagger$, Wei Hu$^\ddagger$
\thanks{Gene Cheung acknowledges the support of the NSERC grants [RGPIN-2019-06271],  [RGPAS-2019-00110].}
\thanks{Wei Hu acknowledges the support of National Natural Science Foundation of China (61972009) and Beijing Natural Science Foundation (4194080).}
}
\address{$^\dagger$Department of Electrical Engineering \& Computer Science, York University, Toronto, Canada\\
         $^\ddagger$Wangxuan Institute of Computer Technology, Peking University, Beijing, China}

%
% For example:
% ------------
%\address{School\\
%	Department\\
%	Address}
%
% Two addresses (uncomment and modify for two-address case).
% ----------------------------------------------------------
%\twoauthors
%  {A. Author-one, B. Author-two\sthanks{Thanks to XYZ agency for funding.}}
%	{School A-B\\
%	Department A-B\\
%	Address A-B}
%  {C. Author-three, D. Author-four\sthanks{The fourth author performed the work
%	while at ...}}
%	{School C-D\\
%	Department C-D\\
%	Address C-D}
%
\begin{document}
\ninept
\maketitle
\begin{abstract}
We propose a %fast 
general projection-free metric learning framework, where the minimization objective $\min_{\M \in \cS} Q(\M)$ is a convex differentiable function of the metric matrix $\M$, and $\M$ resides in the set $\cS$ of generalized graph Laplacian matrices for connected graphs with positive edge weights and node degrees.
Unlike low-rank metric matrices common in the literature, $\cS$ includes the important positive-diagonal-only matrices as a special case in the limit. 
The key idea for fast optimization is to rewrite the positive definite cone constraint in $\cS$ as signal-adaptive linear constraints via Gershgorin disc alignment, so that the alternating optimization of the diagonal and off-diagonal terms in $\M$ can be solved efficiently as linear programs via Frank-Wolfe iterations.
We prove that left-ends of the Gershgorin discs can be aligned perfectly using the first eigenvector $\v$ of $\M$, which we update iteratively using Locally Optimal Block Preconditioned Conjugate Gradient (LOBPCG) with warm start as diagonal / off-diagonal terms are optimized.
Experiments show that our efficiently computed graph metric matrices outperform metrics learned using competing methods in terms of classification tasks. %in a range of applications. 
\end{abstract}
\begin{keywords}
Metric Learning, graph signal processing
\end{keywords}
\section{Introduction}
\label{sec:intro}
Given a feature vector $\f_i \in \mathbb{R}^K$ per sample $i$, a \textit{metric} matrix $\M \in \mathbb{R}^{K \times K}$ defines the \textit{feature distance} (\textit{Mahalanobis distance} \cite{mahalanobis1936}) between two samples $i$ and $j$ in a feature space as
$(\mathbf{f}_i - \mathbf{f}_j)^{\top} \mathbf{M} (\mathbf{f}_i - \mathbf{f}_j)$, where $\M$ is commonly assumed to be positive definite (PD). 
\textit{Metric learning}---identifying the best metric $\M$ minimizing a chosen objective function $Q(\M)$ subject to $\M \succ 0$---has been the focus of many recent machine learning research efforts \cite{weinberger09LMNN,qi09icml,lim13icml,liu15aaai,zadeh16GMML}.
% \cite{mu16aaai,zhang17aaai,song17aaai,luo18aaai,luo19aaai,chen19aaai}.

One key challenge in metric learning is to satisfy the positive (semi-)definite (PSD) cone constraint $\M \succ 0$ ($\M \succeq 0$) when minimizing $Q(\M)$ in a computation-efficient manner.
A standard approach is alternating gradient-descent / projection (\eg, \textit{proximal gradient} (PG) \cite{Parikh31}), where a descent step $\balpha$ from current solution $\M^t$ at iteration $t$ in the direction of the negative gradient $-\nabla Q(\M^t)$ is followed by a projection $\mathrm{Pr}()$ back to the PSD cone, \ie, $\M^{t+1} := \mathrm{Pr} \left( \M^t - \balpha \nabla Q(\M^t) \right)$.
However, projection $\mathrm{Pr}()$ typically requires eigen-decomposition of $\M$ and soft-thresholding of its eigenvalues, which is computation-expensive.

Recent methods consider alternative search spaces of matrices such as sparse or low-rank matrices to ease optimization \cite{qi09icml,lim13icml,liu15aaai,mu16aaai,zhang17aaai}. 
While efficient, the assumed restricted search spaces often degrade the quality of sought metric $\M$ in defining the Mahalanobis distance. 
For example, low-rank methods explicitly assume reducibility of the $K$ available features to a lower dimension, and hence exclude the simple yet important weighted feature metric case where $\M$ contains only positive diagonal entries \cite{yang2018apsipa}, \ie, $(\mathbf{f}_i - \mathbf{f}_j)^{\top} \mathbf{M} (\mathbf{f}_i - \mathbf{f}_j) = \sum_{k} m_{k,k} (f_i^k - f_j^k)^2$, $m_{k,k} > 0, \forall k$. 
We show in our experiments that computed metrics by these methods may result in inferior performance for selected applications. 

In this paper, we propose a %fast 
metric learning framework that is both general and projection-free, capable of optimizing any convex differentiable objective $Q(\M)$. 
Compared to low-rank methods, our framework is more encompassing and includes positive-diagonal metric matrices as a special case in the limit\footnote{As the inter-feature correlations tend to zero, only graph self-loops expressing relative importance among the $K$ features remain, and the generalized graph Laplacian matrix tends to diagonal.}. 
The main idea is as follows.
First, we define a search space $\cS$ of \textit{general graph Laplacian} matrices \cite{biyikoglu2005nodal}, each corresponding to a connected graph with positive edge weights and node degrees.
%In essence, 
The underlying graph edge weights capture pairwise correlations among the $K$ features, and the self-loops designate relative importance among the features. 

Assuming $\M \in \cS$, we next rewrite the PD cone constraint as signal-adaptive linear constraints via \textit{Gershgorin disc alignment} \cite{bai19icassp,bai19tsp}:  
i) compute scalars $s_k$'s from previous solution $\M^t$ that align the Gershgorin disc left-ends of matrix $\S \M^t \S^{-1}$, where $\S = \mathrm{diag}(s_1, \ldots, s_K)$, ii) derive scaled linear constraints using $s_k$'s to ensure PDness of the next computed metric $\M^{t+1}$ via the Gershgorin Circle Theorem (GCT) \cite{gahc}.
Linear constraints mean that our proposed alternating optimization of the diagonal and off-diagonal terms in $\M$ can be solved speedily as linear programs (LP) \cite{co1998} via Frank-Wolfe iterations \cite{pmlr-v28-jaggi13}.
We prove that for any metric $\M^t$ in $\cS$, using scalars $s_k = 1 / v_k$ can perfectly align Gershgorin disc left-ends for matrix $\S \M^t \S^{-1}$ at the smallest eigenvalue $\lambda_{\min}$, where $\M^t \v = \lambda_{\min} \v$.
We efficiently update $\v$ iteratively using \textit{Locally Optimal Block Preconditioned Conjugate Gradient} (LOBPCG) \cite{Knyazev01} with warm start as diagonal / off-diagonal terms are optimized.
Experiments show that our computed graph metrics outperform metrics learned using competing methods in terms of classification tasks.

%The paper is organized as follows. We first review previous works on graph learning and point cloud denoising in Section~\ref{sec:related}. Then we introduce basic concepts in graph spectral processing in Section~\ref{sec:graph}. In Section~\ref{sec:learning}, we describe the proposed problem formulation and block coordinate descent algorithm for feature graph learning. We then apply to the problem of point cloud denoising in Section~\ref{sec:formulation}. Finally, experimental results and conclusions are presented in Section~\ref{sec:results} and \ref{sec:conclude}, respectively. 

%\section{Related Work}
%\label{sec:related}
%\input{02_related_work.tex}

\section{Review of Spectral Graph Theory}
\label{sec:graph}
%We first review basic concepts in spectral graph theory \cite{chung97} that are necessary to understand our graph metric learning framework. 

%\subsection{Graph and Graph Laplacian}
%\label{subsec:laplacian}

We consider an undirected graph $ \mathcal{G}=\{\mathcal{V},\mathcal{E}, \mathbf{W}\} $ composed of a node set $ \mathcal{V} $ of cardinality $|\mathcal{V}|=N$, an edge set $ \mathcal{E} $ connecting nodes, and a weighted adjacency matrix $\mathbf{W}$. Each edge $(i,j) \in \mathcal{E}$ has a positive weight $w_{i,j} > 0$ which reflects the degree of similarity (correlation) between nodes $i$ and $j$.
Specifically, it is common to compute edge weight $w_{i,j}$ as the exponential of the negative feature distance $\delta_{i,j}$ between nodes $i$ and $j$ \cite{shuman13spm}: 
\begin{align}
w_{i,j} = \exp \left( - \delta_{i,j} \right) 
\label{eq:edgeWeight}
\end{align}
Using \eqref{eq:edgeWeight} means $w_{i,j} \in (0, 1]$ for $\delta_{i,j} \in [0,\infty)$.
We discuss feature distance $\delta_{i,j}$ in the next section.

There may be \textit{self-loops} in graph $\cG$, \ie, $\exists i$ where $w_{i,i} > 0$, and the corresponding diagonal entries of $\W$ are positive.
The \textit{combinatorial graph Laplacian} \cite{shuman13spm} is defined as $ \L:=\D-\W $, where $ \D $ is the \textit{degree matrix}---a diagonal matrix where $ d_{i,i} = \sum_{j=1}^N w_{i,j}$. 
A \textit{generalized graph Laplacian} \cite{biyikoglu2005nodal} accounts for self-loops in $\cG$ also and is defined as $\L_g = \D - \W + \mathrm{diag}(\W) $, where $\mathrm{diag}(\W)$ extracts the diagonal entries of $\W$.
Alternatively we can write $\L_g = \D_g - \W$, where the \textit{generalized degree matrix} $\D_g = \D + \mathrm{diag}(\W)$ is diagonal.

%Thus minimizing \eqref{eq:prior} forces $ \vec{z} $ to adapt to the topology of $ \mathcal{G}$.
%GLR also has a frequency interpretation:
%\begin{align}
%\vec{z}^{\top} \mathbf{L} \vec{z} = \sum_k \lambda_k \alpha_k^2
%\end{align}
%where $\lambda_k$ is the $k$-th eigenvalue of $\mathbf{L}$ and is commonly interpreted as the $k$-th graph frequency, and $\alpha_k = \mathbf{v}_k^{\top} \mathbf{z}$ is the inner-product between $\L$'s $k$-th eigenvector and signal $\z$. $\alpha_k^2$ is signal $\z$'s energy in the $k$-th graph frequency band. Thus, a small $\z^{\top} \L \z$ means that most signal energies reside in the low graph frequencies, or $\z$ is approximately low-pass.

\section{Graph Metric Learning}
\label{sec:learning}
\subsection{Graph Metric Matrices}

%In a similarity graph $\cG$, an edge weight $w_{i,j}$ reflects the degree to which the two samples at nodes $i$ and $j$ are similar. Specifically, employing the commonly used Gaussian kernel \cite{shuman13spm}, edge weight $w_{i,j} = \exp \left\{ - \delta_{i,j} \right\}$, where $\delta_{i,j}$ denotes the feature distance between samples $i$ and $j$. Using the Gaussian kernel means that $w_{i,j}$ is in range $[0,1]$, which in turn implies that the resulting combinatorial graph Laplacian matrix $\mathbf{L}$ is \textit{positive semi-definite} (PSD) \cite{cheung2018graph}.
%\red{can include the proof here if room permits.}

We first define the search space of metric matrices for our optimization framework. 
We assume that associated with each sample $i$ is a length-$K$ feature vector $\f_i \in \mathbb{R}^K$. 
A metric matrix $\M \in \mathbb{R}^{K \times K}$ defines the feature distance $\delta_{i,j}(\M)$---the \textit{Mahalanobis distance} \cite{mahalanobis1936}---between samples $i$ and $j$ as: 
\begin{equation}
    \delta_{i,j}(\M) = (\f_i-\f_j)^{\top} \mathbf{M} (\f_i-\f_j)
    \label{eq:featureDist}
\end{equation}
We require $\M$ to be a \textit{positive definite} (PD) matrix\footnote{By definition of a metric \cite{fsp2014}, $(\f_i-\f_j)^{\top} \M (\f_i-\f_j) > 0$ if $\f_i - \f_j \neq \0$.}. 
%We discuss a pre-processing step to eliminate irrelevant features in $\f_i$ (if they exist) in our experiments.}. 
%
%as applying a linear transformation of the input data \cite{kulis2013metric}: since $\mathbf{M}$ is PSD, it can be factorized as $\mathbf{M}=\mathbf{G}^{\top}\mathbf{G}$, which converts the distance to $\| \mathbf{G} \mathbf{f}_i -\mathbf{G} \mathbf{f}_j \|_2^2$. Thus the Mahalanobis distance is able to capture the distance of intrinsic features via the linear transformation.   
%
The special case where $\M$ is diagonal with strictly positive entries
%---difference in feature $k$, $f_i(k) - f_j(k)$, is weighted by $w_{k,k}$ in diagonal $\M$ based on its relative importance---
was studied in \cite{yang2018apsipa}. 
Instead, we study here a more general case: $\M$ must be a \textit{graph metric matrix}, which we define formally as follows.
\begin{definition}
A PD symmetric matrix $\M$ is a graph metric if it is a generalized graph Laplacian matrix with positive edge weights and node degrees for an irreducible graph.
\end{definition}

\noindent
\textbf{Remark}: A generalized graph Laplacian matrix $\M$ with positive degrees means $m_{i,i} > 0$; in graph terminology, each graph node $i$ may have a self-loop, but its loop weight $w_{i,i}$ must satisfy $w_{i,i} > - \sum_{j\,|\, j \neq i} w_{i,j}$.
Positive edge weights means $m_{i,j} \leq 0, i \neq j$.
Irreducible graph \cite{irregraph}
%means that there exists a sufficiently large $T \in \mathbb{R}^+$, where all entries in $\M^T$ are non-zeros. 
essentially means that any graph node can \textit{commute} with any other node. 

%\vspace{0.1in}
%\noindent
%\textbf{Remark}: 
%A graph $\cG$ captures pairwise correlation / similarity between feature nodes with edges $w_{i,j} > 0$, $i\neq j$, and relative feature importance with self-loops $w_{i,i}$. An irreducible graph means any feature is at least marginally related to any other feature. Compared to low-rank methods, these are weak assumptions on the $K$ available features.

\subsection{Problem Formulation}

Denote by $\cS$ the set of all graph metric matrices.
We pose an optimization problem for $\M$: 
find the optimal graph metric $\M$ in $\cS$---leading to inter-sample distances $\delta_{i,j}(\M)$ in \eqref{eq:featureDist}---that yields the smallest value of a convex differential objective $Q(\{\delta_{i,j}(\M)\})$:
%Specifically, denote by $d_{i,j} = (z_i-z_j)^2$ the inter-node sample difference square of observation $\mathbf{z}$. We write

\begin{align}
\min_{\mathbf{M} \in \cS}
    Q\left(\{\delta_{i,j}(\M)\} \right),
    ~~~\mbox{s.t.}~~ \mathrm{tr}(\M) \leq C
\label{eq:optimize_M}
\end{align}
where $C > 0$ is a chosen parameter. 
Constraint $\mathrm{tr}(\M) \leq C$ is needed to avoid pathological solutions with infinite feature distances, \ie, $\delta_{i,i}(\M) = \infty$. 
For stability, we assume also that the objective is lower-bounded, \ie, $\min_{\M \in \cS} Q(\{\delta_{i,j}(\M)\}) \geq \kappa > -\infty$ for some constant $\kappa$.
%We discuss two concrete examples of objective $Q(\{\delta_{i,j}(\M)\})$ in our experiments. 

%As a concrete example, consider GLR in \eqref{eq:GLR} as the objective $Q(\{\delta_{i,j}(\M)\})$:
%\begin{align}
%Q(\{\delta_{i,j}(\M)\}) = \sum_{i,j} 
%\exp\left( - \delta_{i,j}(\M) \right) \left(z_i - z_j \right)^2
%\end{align}
%where edge weight $w_{i,j}$ is computed using \eqref{eq:edgeWeight}, and $\z$ is an observed signal. In this case, optimizing $\M$ using GLR leads to definition of a similarity graph, which can be used, for example, for graph-based binary classification as demonstrated in our experiments.

Our strategy to solve \eqref{eq:optimize_M} is to optimize $\mathbf{M}$'s diagonal and off-diagonal terms alternately using Frank-Wolfe iterations \cite{pmlr-v28-jaggi13}, where each iteration is solved as an LP until convergence. 
We discuss first the initialization of $\M$, then the two optimizations in order.
For notation convenience, we will write the objective simply as $Q(\M)$, with the understanding that metric $\M$ affects first the feature distances $\delta_{i,j}(\M)$, which in turn determine the objective $Q(\{\delta_{i,j}(\M)\})$.

\subsection{Initialization of $\M$}

%Before starting our optimization of the diagonal and off-diagonal terms, 
We first initialize a valid graph metric $\M^0$ as follows:
\begin{enumerate}
\item Initialize each diagonal term $m_{i,i}^0 := C/K$.
\item Initialize off-diagonal terms $m_{i,j}^0$, $i \neq j$, as:
\begin{align}
m_{i,j}^0 := \left\{ \begin{array}{ll} 
-\epsilon & \mbox{if}~ j=i \pm 1 \\ 
 0 & \mbox{o.w.}
 \end{array} \right.
 \end{align}
\end{enumerate}
where $\epsilon > 0$ is a small parameter. 
Initialization of the diagonal terms ensures that constraints $\mathrm{tr}(\M^0) \leq C$, $\M^0 \succ 0$ and $m^0_{i,i} > 0$ are satisfied.
Initialization of the off-diagonal terms ensures that $\M^0$ is symmetric and irreducible, and constraint $m_{i,j}^0 \leq 0$, $i\neq j$, is satisfied; \ie, $\M^0$ is a generalized graph Laplacian matrix for graph with positive edge weights.
We can hence conclude that initial $\M^0$ is a graph metric, \ie, $\M^0 \in \cS$.

\subsection{Optimization of Diagonal Terms} 
\label{subsection:opt_diag}

When optimizing $\M$'s diagonal terms $m_{i,i}$,  \eqref{eq:optimize_M} becomes
%Denote by $\g_{i,j} = \f_i-\f_j$, $\g_{i,j} \in \mathbb{R}^K$, the difference in feature vector for nodes $i$ and $j$.
%Further, denote by $\M' = \M - \mathrm{diag}(\M)$ the metric $\mathbf{M}$ with only the off-diagonal entries.
\begin{align}
&\min_{\{m_{i,i}\}} ~~
    %\sum_{\{i,j\}} \exp\{-\g_{i,j}^{\top} \left( \M' + \mathrm{diag}(\M) \right) \g_{i,j} \} \, d_{i,j} \\
    Q(\M)
    \label{eq:optimize_diagonal} \\
& \text{s.t.} \quad \,\M \succ 0; \;\;\;
\sum_{i} m_{i,i} \leq C; ~~~
m_{i,i} > 0, \,\forall i \nonumber
\end{align}
where $\mathrm{tr}(\M) = \sum_i m_{i,i}$. 
Because the diagonal terms do not affect the irreducibility of matrix $\M$, the only requirements for $\M$ to be a graph metric are: i) $\M$ must be PD, and ii) diagonals must be strictly positive.

\subsubsection{Gershgorin-based Reformulation}

To efficiently enforce the PD constraint $\M \succ 0$, we derive sufficient (but not necessary) linear constraints using the \textit{Gershgorin Circle Theorem} (GCT) \cite{gahc}.
By GCT, each eigenvalue $\lambda$ of a real matrix $\M$ resides in at least one \textit{Gershgorin disc} $\Psi_i$, corresponding to row $i$ of $\M$, with center $c_i = m_{i,i}$ and radius $r_i = \sum_{j \,|\, j\neq i} |m_{i,j}|$, \ie,
\begin{align}
\exists i ~~\mbox{s.t.}~~
c_i - r_i \leq \lambda \leq c_i + r_i
\end{align}
Thus a sufficient condition to ensure $\M$ is PD (smallest eigenvalue $\lambda_{\min} > 0$) is to ensure that all discs' left-ends are strictly positive, \ie,
\begin{align}
0 < \min_i c_i - r_i \leq \lambda_{\min}
\end{align}
This translates to a linear constraint for each row $i$:
\begin{align}
m_{i,i} \geq \sum_{j \,|\, j \neq i} |m_{i,j}| + \rho,
~~~~~~ \forall i \in \{1, \ldots, K\}
\label{eq:GCT_linConst}
\end{align}
where $\rho > 0$ is a sufficiently small parameter. 

However, GCT lower bound $\min_i c_i - r_i$ for $\lambda_{\min}$ is often loose.
When optimizing $\M$'s diagonal terms, enforcing \eqref{eq:GCT_linConst} directly means that we are searching for $\{m_{i,i}\}$ in a much smaller space than the original space $\{\M ~|~ \M \succ 0\}$ in \eqref{eq:optimize_diagonal}, resulting in an inferior solution. 
As an illustration, consider the following example matrix $\M$:
\begin{align}
\M = \left[ \begin{array}{ccc}
%2 & 2 & 1 \\
%2 & 5 & -2 \\
%1 & -2 & 4
2 & -2 & -1 \\
-2 & 5 & -2 \\
-1 & -2 & 4
\end{array} \right]
\label{eq:exM}
\end{align}
Gershgorin disc left-ends $m_{i,i} - \sum_{j \,|\, j\neq i} |m_{i,j}|$ for this matrix are $\{-1, 1, 1\}$, of which $-1$ is the smallest.
Thus the diagonal terms $\{2, 5, 4\}$ do not meet constraints \eqref{eq:GCT_linConst}.
However, $\M$ is PD, since its smallest eigenvalue is $\lambda_{\min} = 0.1078 > 0$.

\subsubsection{Gershgorin Disc Alignment}

To derive more appropriate linear constraints---thus more suitable search space when solving $\min_{\M \in \cS} \Q(\M)$, we examine instead the Gershgorin discs of a similar-transformed matrix $\B$ from $\M$, \ie,
\begin{align}
\B = \S \M \S^{-1}
\label{eq:similarTrans}
\end{align}
where $\S = \mathrm{diag}(s_1, \ldots, s_K)$ is a diagonal matrix with scalars $s_1, \ldots, s_K$ along its diagonal, $s_k > 0, \,\forall k$.
$\B$ has the same eigenvalues as $\M$, and thus the smallest Gershgorin disc left-end, $\min_i b_{i,i} - \sum_{j \,|\, j\neq i} |b_{i,j}|$, for $\B$ is also a lower bound for $\M$'s smallest eigenvalue $\lambda_{\min}$. 
\textit{
Our goal is to derive tight $\lambda_{\min}$ lower bounds by adapting to good solutions to \eqref{eq:optimize_diagonal}---by appropriately choosing scalars $s_1, \ldots, s_K$ used to define similar-transformed $\B$ in \eqref{eq:similarTrans}.
}

Specifically, given scalars $s_1, \ldots, s_K$, a disc $\Psi_i$ for $\B$ has center $m_{i,i}$  and   radius $s_i \sum_{j \,|\, j \neq i} |m_{i,j}|/s_j$. 
Thus to ensure $\B$ is PD (and hence $\M$ is PD), we can write similar linear constraints as \eqref{eq:GCT_linConst}:
\begin{align}
m_{i,i} \geq s_i \sum_{j \,|\, j \neq i} \frac{|m_{i,j}|}{s_j} + \rho, 
~~~~ \forall i \in \{1, \ldots, K\}
\label{eq:scaled_linConst}
\end{align}
It turns out that given a graph metric $\M$, there exist scalars $s_1, \ldots, s_K$ such that all Gershgorin disc left-ends are aligned at the same value $\lambda_{\min}$. 
We state this formally as a theorem.

\begin{theorem}
Let $\M$ be a graph metric matrix. 
There exist strictly positive scalars $s_1, \ldots, s_K$ such that all Gershgorin disc left-ends of $\B = \S \M \S^{-1}$ are aligned exactly at the smallest eigenvalue, \ie, $b_{i,i} - \sum_{j \,|\, j\neq i} |b_{i,j}| = \lambda_{\min}, \forall i$.
\end{theorem}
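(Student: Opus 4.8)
The plan is to exhibit the aligning scalars explicitly as $s_k = 1/v_k$, where $\v = (v_1, \ldots, v_K)^\top$ is the eigenvector associated with $\lambda_{\min}$, and then verify the alignment by direct computation. The crux is that $\v$ can be chosen to have strictly positive entries, which is exactly where the graph metric hypotheses (non-positive off-diagonals and irreducibility) enter. First I would record the sign structure forced by the definition: since $\M$ is a generalized graph Laplacian with positive edge weights, its off-diagonal entries satisfy $m_{i,j} \leq 0$ for $i \neq j$, while $m_{i,i} > 0$. This is precisely the sign pattern of a symmetric M-matrix, and it is what will later make the absolute values in the disc radii collapse cleanly.

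Second, to show the $\lambda_{\min}$-eigenvector is positive, I would pass to a shifted matrix and invoke Perron–Frobenius theory. Choose $c > 0$ large enough that $\mathbf{P} := c\I - \M$ has all entries non-negative; this is possible because the off-diagonals $-m_{i,j} \geq 0$ are already non-negative, so only the diagonal needs the shift. Because the underlying graph is irreducible, $\mathbf{P}$ is an irreducible non-negative matrix, so by Perron–Frobenius its spectral radius is a simple eigenvalue admitting a strictly positive eigenvector. Eigenvectors are preserved under the shift, and since $\mathbf{P}$ is symmetric, its largest eigenvalue (the Perron root) corresponds to the smallest eigenvalue of $\M$; hence the $\lambda_{\min}$-eigenvector $\v$ of $\M$ may be taken with $v_k > 0$ for all $k$.

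Third, I would set $s_k = 1/v_k > 0$, so that $\S = \mathrm{diag}(1/v_1, \ldots, 1/v_K)$ and $b_{i,j} = (v_j/v_i)\, m_{i,j}$. For each row $i$, since $v_j/v_i > 0$ and $m_{i,j} \leq 0$ for $j \neq i$, the absolute value unwinds as $|b_{i,j}| = -(v_j/v_i)\, m_{i,j}$, so the disc left-end becomes $m_{i,i} + \sum_{j \neq i} (v_j/v_i)\, m_{i,j} = (1/v_i)\sum_j m_{i,j} v_j = (1/v_i)(\M\v)_i = \lambda_{\min}$. As this holds for every $i$, all left-ends coincide at $\lambda_{\min}$, proving the claim.

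The routine computation in the final step is immediate once positivity of $\v$ is in hand; the real content—and the main obstacle—is the second step, namely guaranteeing a strictly positive $\lambda_{\min}$-eigenvector. Both hypotheses are essential there: the non-positive off-diagonal sign pattern is what renders the shifted matrix $\mathbf{P}$ non-negative (and what makes the absolute values collapse in the third step), while irreducibility is what upgrades Perron–Frobenius from a merely non-negative to a strictly positive eigenvector. Without irreducibility one could only guarantee a non-negative $\v$, and a vanishing entry $v_k = 0$ would make the scalar $s_k = 1/v_k$ ill-defined, so I would be careful to flag that irreducibility is used precisely to avoid this degeneracy.
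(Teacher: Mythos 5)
Your proof is correct and follows essentially the same route as the paper: a strictly positive $\lambda_{\min}$-eigenvector $\v$ obtained via Perron--Frobenius (the paper's Lemma 1), the same scalars $s_k = 1/v_k$, and the same final computation collapsing $|b_{i,j}| = -(v_j/v_i)\,m_{i,j}$ so that each disc left-end equals $(1/v_i)(\M\v)_i = \lambda_{\min}$. The only difference is internal to the positivity lemma---you apply Perron--Frobenius to the symmetric shift $c\I - \M$, whereas the paper rewrites the eigenvalue equation as $\v = \D_g^{-1}(\W + \lambda_{\min}\I)\v$ and invokes the theorem there; your variant is, if anything, slightly more airtight, since symmetry of $c\I - \M$ immediately identifies the Perron root with $c - \lambda_{\min}$ (hence the positive eigenvector with $\v$), a correspondence the paper's formulation leaves implicit.
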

In other words, for matrix $\B$ the Gershgorin lower bound $\min_i c_i - r_i$ is \textit{exactly} $\lambda_{\min}$, and the bound is the tightest possible.
The important corollary is the following:
\begin{corollary}
For any graph metric matrix $\M$, which by definition is PD, there exist scalars $s_1, \ldots, s_K$ where $\M$ is feasible using linear constraints in \eqref{eq:scaled_linConst}.
\end{corollary}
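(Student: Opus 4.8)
The plan is to obtain the Corollary as an immediate consequence of the preceding Theorem, with only a short feasibility verification left to carry out. First I would invoke the Theorem to obtain strictly positive scalars $s_1, \ldots, s_K$ for which every Gershgorin disc left-end of $\B = \S \M \S^{-1}$ coincides with $\lambda_{\min}$. I would then record the two elementary facts about a diagonal similarity transform: the diagonal is preserved, $b_{i,i} = m_{i,i}$, while off-diagonal entries scale as $b_{i,j} = s_i m_{i,j}/s_j$, so that $|b_{i,j}| = s_i |m_{i,j}|/s_j$ since every $s_k > 0$. Substituting these into the alignment identity $b_{i,i} - \sum_{j \,|\, j \neq i} |b_{i,j}| = \lambda_{\min}$ rewrites it purely in terms of $\M$ as
\[
m_{i,i} - s_i \sum_{j \,|\, j \neq i} \frac{|m_{i,j}|}{s_j} = \lambda_{\min}, \qquad \forall i.
\]

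Next I would line this equality up against the linear feasibility condition \eqref{eq:scaled_linConst}, which requires $m_{i,i} - s_i \sum_{j \,|\, j \neq i} |m_{i,j}|/s_j \geq \rho$ for each $i$. With the alignment identity in hand, all $K$ of these inequalities collapse to the single scalar requirement $\lambda_{\min} \geq \rho$. Because $\M$ is a graph metric and is therefore PD, its smallest eigenvalue obeys $\lambda_{\min} > 0$; hence the free parameter $\rho$ can be taken in $(0, \lambda_{\min}]$, and all $K$ constraints then hold simultaneously for the same scalars $s_1, \ldots, s_K$. This exhibits $\M$ as feasible under \eqref{eq:scaled_linConst} and completes the argument.

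I do not expect a substantive obstacle, since the Corollary is in essence the Theorem reread through the lens of constraint \eqref{eq:scaled_linConst}. The one place that warrants care is the treatment of $\rho$: constraint \eqref{eq:scaled_linConst} is a strict-interior tightening of the alignment identity by $\rho$, so the proof must exploit that $\rho$ is a tunable small parameter (as the paper already stipulates) together with $\lambda_{\min} > 0$ to guarantee room to satisfy it. Were $\rho$ instead fixed beforehand at a value exceeding $\lambda_{\min}$ for some nearly singular admissible $\M$, feasibility could be lost; I would therefore state explicitly that $\rho \leq \lambda_{\min}$ is the only condition the argument relies upon.
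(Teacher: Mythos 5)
Your proposal is correct and follows essentially the same route as the paper: invoke Theorem 1 to align all disc left-ends of $\B = \S \M \S^{-1}$ at $\lambda_{\min}$, rewrite the alignment identity as $m_{i,i} - s_i \sum_{j \,|\, j \neq i} |m_{i,j}|/s_j = \lambda_{\min} > 0$, and conclude feasibility of \eqref{eq:scaled_linConst} for sufficiently small $\rho > 0$. Your explicit caveat that the argument needs $\rho \leq \lambda_{\min}$ is exactly the content of the paper's phrase ``for sufficiently small $\rho > 0$,'' so the two proofs coincide.
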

\begin{proof}
By Theorem 1, let $s_1, \ldots, s_K$ be scalars such that all Gershgorin disc left-ends of $\B = \S \M \S^{-1}$ align at $\lambda_{\min}$. 
Thus
\vspace{-0.1in}
\begin{align}
\forall i, ~~
m_{i,i} - s_i \sum_{j\,|\,j\neq i} \frac{|m_{i,j}|}{s_j} = \lambda_{\min} > 0
\end{align}
where $\lambda_{\min} > 0$ since $\M$ is PD.
Hence $\M$ must also satisfy \eqref{eq:scaled_linConst} for all $i$ for sufficiently small $\rho > 0$.
\end{proof}
Continuing our earlier example, using $s_1 = 0.7511$, $ s_2 = 0.4886$ and $s_3=0.4440$, we see that  $\B = \S \M \S^{-1}$ for $\M$ in \eqref{eq:exM} has all disc left-ends aligned at $\lambda_{\min} = 0.1078$.
Hence using these scalars and constraints \eqref{eq:scaled_linConst}, diagonal terms $\{2, 5, 4\}$ now constitute a feasible solution.

To prove Theorem 1, we first establish the following lemma. 
%there exists a strictly positive first eigenvector $\v$ for a graph metric matrix $\M$.

%\begin{definition}
%A signed graph is balance if no cycle has an odd number of negative edges. 
%\end{definition}
%\begin{definition}
%A generalized graph Laplacian matrix $\cL$ is a combinatorial graph Laplacian $\L = \D - \A$ plus a diagonal matrix containing the self-loops of $\A$, \ie, $\cL = \D-\A + \mathrm{diag}(\A)$.  
%\end{definition}

\begin{lemma}
There exists a first eigenvector $\v$ with strictly positive entries for a graph metric matrix $\M$.
\end{lemma}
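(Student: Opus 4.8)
The plan is to exploit the sign structure of a graph metric matrix together with the Perron-Frobenius theorem. By Definition 1, $\M$ is symmetric with strictly positive diagonal entries $m_{i,i} > 0$, non-positive off-diagonal entries $m_{i,j} \leq 0$ for $i \neq j$, and corresponds to an irreducible (connected) graph. This off-diagonal sign pattern is exactly what makes the result accessible: negating the off-diagonals turns them non-negative, placing the problem squarely in Perron-Frobenius territory, while ``first eigenvector'' here means the eigenvector associated with the smallest eigenvalue $\lambda_{\min}$.

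First I would pick a scalar $a > \max_i m_{i,i}$ and form the shifted matrix $\mathbf{N} = a \I - \M$. Its off-diagonal entries are $-m_{i,j} \geq 0$ and its diagonal entries $a - m_{i,i} > 0$, so $\mathbf{N}$ is entrywise non-negative. Moreover, the off-diagonal support of $\mathbf{N}$ coincides with the edge set of the graph underlying $\M$; since that graph is irreducible/connected, $\mathbf{N}$ is an irreducible non-negative matrix.

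Next I would apply the Perron-Frobenius theorem to $\mathbf{N}$: an irreducible non-negative matrix has a simple largest eigenvalue $\mu_{\max}$ (equal to its spectral radius) whose eigenvector $\u$ can be chosen with strictly positive entries. Because $\mathbf{N}$ and $\M$ share the same eigenvectors and their eigenvalues are related by $\lambda = a - \mu$, the Perron eigenpair $(\mu_{\max}, \u)$ of $\mathbf{N}$ corresponds to the pair $(a - \mu_{\max}, \u)$ of $\M$; the \emph{largest} $\mu$ maps to the \emph{smallest} $\lambda$, so $a - \mu_{\max} = \lambda_{\min}$. Hence the eigenvector of $\M$ associated with $\lambda_{\min}$, namely the first eigenvector $\v := \u$, has strictly positive entries, and simplicity of $\mu_{\max}$ guarantees it is unique up to scaling.

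The main thing to handle with care is the irreducibility bookkeeping: I must confirm that graph connectivity translates into irreducibility of $\mathbf{N}$, so that Perron-Frobenius yields a \emph{strictly} positive rather than merely non-negative eigenvector, and that the spectral reflection $\lambda \mapsto a - \mu$ indeed sends the Perron (largest) eigenvalue of $\mathbf{N}$ to the smallest eigenvalue of $\M$. Everything else is a direct application of standard Perron-Frobenius theory to the non-negatively shifted matrix.
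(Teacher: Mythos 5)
Your proof is correct, but it takes a genuinely different route from the paper's. The paper works directly with the Laplacian decomposition $\M = \D_g - \W$: it rearranges the eigen-equation $\M \v = \lambda_{\min} \v$ into $\v = \D_g^{-1}(\W + \lambda_{\min} \I)\v$, so that $\v$ appears as an eigenvector of a non-negative irreducible matrix, and then invokes Perron-Frobenius; this uses positive definiteness (to make $\lambda_{\min} > 0$ and hence the shifted matrix non-negative) and positive degrees (to invert $\D_g$). Your additive shift $\mathbf{N} = a\I - \M$ avoids the Laplacian decomposition entirely: it needs only symmetry, the sign pattern $m_{i,j} \leq 0$ for $i \neq j$, and irreducibility --- not positive definiteness or positive degrees --- so it applies verbatim to any irreducible symmetric matrix with non-positive off-diagonal entries. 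Your version is also tighter on the one point where care is genuinely needed: Perron-Frobenius guarantees a strictly positive eigenvector only for the eigenvalue equal to the spectral radius, and you explicitly check that the order-reversing map $\mu \mapsto a - \mu$ sends the Perron root $\mu_{\max}$ of $\mathbf{N}$ to $\lambda_{\min}$ of $\M$, so the eigenvector you exhibit really is the Perron vector. The paper's proof, by contrast, concludes positivity of $\v$ without verifying that the eigenvalue $1$ of $\D_g^{-1}(\W + \lambda_{\min}\I)$ is in fact that matrix's spectral radius (it is, but this requires an extra argument); your construction sidesteps that subtlety. As a side benefit, simplicity of the Perron root gives you that the first eigenvector is unique up to scaling, which is slightly more than the lemma asks for and is implicitly relied upon later when the paper sets $s_k = 1/v_k$.
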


\begin{proof}
By definition, graph metric matrix $\M$ is a generalized graph Laplacian $\L_g = \D_g - \W$ with positive edge weights in $\W$ and positive degrees in $\D_g$.
Let $\v$ be the first eigenvector of $\M$, \ie, 
\begin{align}
\M \v &= \lambda_{\min} \v \nonumber \\
(\D_g - \W) \v &= (\lambda_{\min} \I) \v \nonumber \\
\D_g \v &= (\W + \lambda_{\min} \I) \v \nonumber \\
\v &= \D_g^{-1} (\W + \lambda_{\min} \I) \v \nonumber
\end{align}
where $\lambda_{\min} > 0$ since $\M$ is PD.
Since the matrix on the right contains only non-negative entries and $\W$ is an irreducible matrix, $\v$ is a positive eigenvector by the Perron-Frobenius Theorem \cite{ma2012}.
\end{proof}

%Having established Lemma 1, 
We now prove Theorem 1 as follows.
\begin{proof}
Denote by $\v$ a strictly positive eigenvector corresponding to graph metric matrix $\M$'s smallest eigenvalue $\lambda_{\min}$. 
Define $\S = \mathrm{diag}(1/v_1, \ldots, 1/v_K)$.
Then,
\begin{align}
\S \M \S^{-1} \S \v = \lambda_{\min} \S \v 
\end{align}
where $\S \v = \1 = [1, \ldots, 1]^{\top}$.
Let $\B = \S \M \S^{-1}$.
Then,
\begin{align}
\B \1 = \lambda_{\min} \1 
\label{eq:GCT_proof}
\end{align}
\eqref{eq:GCT_proof} means that
\vspace{-0.05in}
\begin{align}
b_{i,i} + \sum_{j \,|\, j \neq i} b_{i,j} &= \lambda_{\min}, ~~~ \forall i \nonumber 
\end{align}
Note that the off-diagonal terms $b_{i,j} = (v_i/v_j) m_{i,j} \leq 0$, since i) $\v$ is strictly positive and ii) off-diagonal terms of graph metric $\M$ satisfy $m_{i,j} \leq 0$. 
Thus,
\begin{align}
b_{i,i} - \sum_{j \,|\, j \neq i} |b_{i,j}| &= \lambda_{\min}, ~~~ \forall i
\end{align}
Thus defining $\S = \mathrm{diag}(1/v_1, \ldots, 1/v_K)$ means $\B = \S \M \S^{-1}$ has all its Gershgorin disc left-ends aligned at $\lambda_{\min}$. 
\end{proof}

%\begin{proof}
%We prove by contradiction. Consider first the case where $\u$ is the first eigenvector of a real symmetric $\M$ with at least one non-zero off-diagonal entry in each row, and $\u$ has \textit{exactly} one entry $i$ where $u_i = 0$.By definition, $\u$ minimizes the Rayleigh quotient, \ie, $\u = \min_{\v} \frac{\v^{\top} \M \v}{\v^{\top} \v}$, and $\u^{\top} \M \u = \sum_i \sum_{j} u_i u_j m_{i,j}$. So a change $\delta$ in entry $i$ of $\u$ will induce the following change $\Delta(\delta)$ in $\u^{\top} \M \u$:
%\begin{align}
%\Delta(\delta) = \delta^2 \, m_{i,i} + \delta \sum_{j ~|~ j \neq i} u_j m_{i,j}
%\end{align}
%By assumption there exists at least one $j$, $j\neq i$, where $m_{i,j} < 0$. Thus $\sum_{j~|~j\neq i} u_j m_{i,j} < 0$, and there exists a $\delta > 0$ where $\Delta(\delta) < 0$, and thus $\u$ cannot be the the vector that minimizes the Rayleigh quotient, a contradiction.  $\Box$
%\end{proof}

Thus, using a positive first eigenvector $\v$ of a graph metric $\M$, one can compute corresponding scalars $s_k = 1/v_k$ to align all disc left-ends of $\B = \S \M \S^{-1}$ at $\lambda_{\min}$, and $\M$ satisfies \eqref{eq:scaled_linConst} by Corollary 1.
Note that these scalars are \textit{signal-adaptive}, \ie, $s_k$'s depend on $\v$, which is computed from $\M$.
Our strategy then is to derive scalars $s_k^t$'s from a good solution $\M^{t-1}$, optimize for a better solution $\M^t$ using scaled Gershgorin linear constraints \eqref{eq:scaled_linConst}, derive new scalars again until convergence.
Specifically,
\begin{enumerate}
\item Given scalars $s_k^t$'s, identify a good solution $\M^t$ minimizing objective $Q(\M)$ subject to \eqref{eq:scaled_linConst}, \ie, 
\begin{align}
\min_{\{m_{i,i}\}} &
    Q \left( \M \right) \label{eq:optimize_diagonal2}  \\
\text{s.t.} & ~~ m_{i,i} \geq s_i \sum_{j \,|\, j \neq i} \frac{|m_{i,j}|}{s_j} + \rho, \forall i;
~~~\sum_{i} m_{i,i} \leq C 
\nonumber
\end{align}
\item Given $\M^t$, update scalars $s_k^{t+1} = 1/v_k^t$ where $\v^t$ is the first eigenvector of $\M^t$.
\item Increment $t$ and repeat until convergence.
\end{enumerate}

When the scalars in \eqref{eq:optimize_diagonal2} are updated as $s_k^{t+1} = 1/v_k^t$ for iteration $t+1$, we show that previous solution $\M^t$ at iteration $t$ remains feasible at iteration $t+1$:

\begin{lemma}
Solution $\M^t$ to \eqref{eq:optimize_diagonal2} in iteration $t$ remains feasible in iteration $t+1$, when scalars $s_i^{t+1}$ for the linear constraints in \eqref{eq:optimize_diagonal2} are updated as $s_i^{t+1} = 1/v_i^t, \forall i$, where $\v^t$ is the first eigenvector of $\M^t$. 
\end{lemma}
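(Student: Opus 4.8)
The plan is to show that $\M^t$ satisfies both constraint families of \eqref{eq:optimize_diagonal2} once the scalars are replaced by $s_i^{t+1} = 1/v_i^t$. The trace constraint $\sum_i m_{i,i} \leq C$ is immediate: updating the scalars does not touch the diagonal entries of $\M^t$, and $\M^t$ already obeyed $\sum_i m_{i,i}^t \leq C$ as a feasible point at iteration $t$. So the whole argument reduces to verifying the scaled Gershgorin constraints for the new scalars.

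First I would confirm that $\M^t$ is itself a graph metric matrix, so that Theorem 1 applies to it. Feasibility at iteration $t$ means $\M^t$ satisfies \eqref{eq:scaled_linConst} with scalars $s_i^t$ and a strictly positive $\rho$; applying GCT to $\S^t \M^t (\S^t)^{-1}$ then forces every disc left-end to be at least $\rho$, whence $\lambda_{\min}^t \geq \rho > 0$ and $\M^t \succ 0$. The off-diagonal entries of $\M^t$ are inherited unchanged from the fixed off-diagonal stage, so they satisfy $m_{i,j}^t \leq 0$ with an irreducible sparsity pattern, while the iteration-$t$ constraints force $m_{i,i}^t > 0$. Hence $\M^t$ is a generalized graph Laplacian with positive edge weights and degrees for an irreducible graph, i.e., $\M^t \in \cS$.

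Next I would invoke Theorem 1 directly on $\M^t$, taking $\v^t$ to be its strictly positive first eigenvector, which exists by Lemma 1. With $\S^{t+1} = \mathrm{diag}(1/v_1^t, \ldots, 1/v_K^t)$, Theorem 1 gives $b_{i,i} - \sum_{j \,|\, j \neq i} |b_{i,j}| = \lambda_{\min}^t$ for every $i$, where $\B = \S^{t+1} \M^t (\S^{t+1})^{-1}$. Rewriting these disc left-ends in terms of the entries of $\M^t$ and the scalars $s_i^{t+1} = 1/v_i^t$ yields $m_{i,i}^t - s_i^{t+1} \sum_{j \,|\, j \neq i} |m_{i,j}^t|/s_j^{t+1} = \lambda_{\min}^t$ for all $i$. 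Since $\lambda_{\min}^t \geq \rho$ from the previous step, each row satisfies the updated constraint $m_{i,i}^t \geq s_i^{t+1} \sum_{j \,|\, j \neq i} |m_{i,j}^t|/s_j^{t+1} + \rho$ — which is exactly Corollary 1 specialized to $\M^t$. Combined with the trace constraint, this establishes that $\M^t$ remains feasible at iteration $t+1$.

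The main obstacle here is bookkeeping rather than a deep difficulty: one must carefully confirm that $\M^t$ still qualifies as a graph metric after only its diagonal has been optimized — in particular that PDness survives, which is precisely where the strictly positive slack $\rho$ in the iteration-$t$ constraints does the work — so that Theorem 1 and the existence of a positive Perron eigenvector $\v^t$ can legitimately be applied. Once the membership $\M^t \in \cS$ is secured, the feasibility claim follows immediately from the perfect disc alignment guaranteed by Theorem 1.
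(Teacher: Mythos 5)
Your proof is correct and follows essentially the same route as the paper: invoke Theorem 1 with the strictly positive first eigenvector $\v^t$ of $\M^t$ to align all Gershgorin disc left-ends of $\S^{t+1}\M^t(\S^{t+1})^{-1}$ exactly at $\lambda_{\min}^t$, then use feasibility at iteration $t$ to conclude that this common value is positive, so the updated constraints (and the unchanged trace constraint) hold. Yours is in fact slightly tighter on two points the paper leaves implicit: you verify explicitly that $\M^t \in \cS$ before applying Theorem 1, and where the paper only asserts $\lambda_{\min}^t > 0$ (which meets the $+\rho$ constraint only for ``sufficiently small'' $\rho$), you derive $\lambda_{\min}^t \geq \rho$ via GCT from the iteration-$t$ constraints, so feasibility holds for the very same $\rho$.
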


\begin{proof}
Using the first eigenvector $\v^t$ of graph metric $\M^t$ at iteration $t$, by the proof of Theorem 1 we know that the Gershgorin disc left-ends of $\B = \S \M^t \S^{-1}$ are aligned at $\lambda_{\min}$.
Since $\M^t$ is a feasible solution in \eqref{eq:optimize_diagonal2}, $\M^t \succ 0$ and $\lambda_{\min} > 0$. 
Thus $\M^t$ is also a feasible solution when scalars are updated as $s_i = 1/v_i^t, \forall i$. 
\end{proof}

%Given that the iterations continue to find feasible solutions and $Q(\M)$ is lower-bounded, we can conclude that the loop converges to a local optimal solution where $\M^{t+1} = \M^t$.

%\begin{proof}
%If current solution $\M^t$ is a globally optimal solution, then it satisfies the KKT condition, specifically the complementary slack condition, \ie, either i) $\nabla Q(\M^t) = \0$ or ii) $\nabla Q(\M^t) \neq 0$ and $\lambda_{\min}(\M^t) = \rho$.
%\end{proof}

%We can further simplify the objective by defining $\tilde{d}_{i,j} = \exp \{ - \mathbf{g}_{i,j}^{\top} \mathbf{M}' \mathbf{g}_{i,j} \} d_{i,j}$, resulting in
%\begin{align}
%\min_{\{m_{i,i}\}} &
%    \sum_{\{i,j\}}\exp \left\{-
%    \sum_k m_{k,k} \mathbf{g}_{i,j}(k)^2 \right\} \, \tilde{d}_{i,j} \nonumber \\
%\text{s.t.} & ~~ m_{i,i} > s_i \sum_{j \,|\, j \neq i} \frac{|m_{i,j}|}{s_j}, \forall i; ~~~\sum_{i} m_{i,i} \leq C.
%\label{eq:optimize_diagonal3}
%\end{align}

The remaining issue is how to best compute first eigenvector $\v^t$ given solution $\M^t$ repeatedly.
For this task, we employ \textit{Locally Optimal Block Preconditioned Conjugate Gradient} (LOBPCG) \cite{Knyazev01}, a fast iterative algorithm known to compute extreme eigenpairs efficiently. 
Further, using previously computed eigenvector $\v^{t-1}$ as an initial guess, LOBPCG benefits from warm start when computing $\v^t$, reducing its complexity in subsequent iterations \cite{Knyazev01}.

\subsubsection{Frank-Wolfe Algorithm}

To solve \eqref{eq:optimize_diagonal2}, we employ the Frank-Wolfe algorithm \cite{pmlr-v28-jaggi13} that iteratively linearizes the objective $Q(\M)$ using its gradient $\nabla Q(\M^t)$ with respect to diagonal terms $\{m_{i,i}\}$, computed using previous solution $\M^t$, \ie,
\begin{align}
\nabla Q(\M^t) = \left. \left[ \begin{array}{c}
\frac{\partial Q(\M)}{\partial m_{1,1}} \\
\vdots \\
\frac{\partial Q(\M)}{\partial m_{K,K}}
\end{array} \right] \right|_{\M^t}
\label{eq:gradM}
\end{align}

Given gradient $\nabla Q(\M^t)$, optimization \eqref{eq:optimize_diagonal2} becomes a \textit{linear program} (LP) at each iteration $t$:
\begin{align}
\min_{\{m_{i,i}\}} &
    \mathrm{vec}(\{m_{i,i}\})^\top ~\nabla Q(\M^t) \label{eq:fwlp} \\
\text{s.t.} & ~~ m_{i,i} \geq s_i \sum_{j \,|\, j \neq i} \frac{|m_{i,j}^t|}{s_j} + \rho, ~~\forall i;
~~~\sum_{i} m_{i,i} \leq C.
\nonumber
\end{align}
where $\mathrm{vec}(\{m_{i,i}\}) = [m_{1,1} ~ m_{2,2} ~\ldots ~ m_{K,K}]^{\top}$ is a vector composed of diagonal terms $\{m_{i,i}\}$, and $m_{i,j}^t$ are off-diagonal terms of previous solution $\M^t$.
LP \eqref{eq:fwlp} can be solved efficiently using known fast algorithms such as Simplex \cite{co1998} and interior point method \cite{co2009}. 
When a new solution $\{m_{i,i}^{t+1}\}$ is obtained, gradient $\nabla Q(\M^{t+1})$ is updated, and LP \eqref{eq:fwlp} is solved again until convergence.

\subsection{Optimization of Off-diagonal Entries}

For off-diagonal entries of $\mathbf{M}$, we design a block coordinate descent algorithm, which optimizes one row / column at a time. 

%\vspace{0.1in}
\subsubsection{Block Coordinate Iteration}
 
First, we divide $\mathbf{M}$ into four sub-matrices:
\begin{equation}
\mathbf{M} = \begin{bmatrix}
m_{1,1} & \mathbf{M}_{1,2} \\
\mathbf{M}_{2,1} & \mathbf{M}_{2,2} 
\end{bmatrix},
\label{eq:submatrix}
\end{equation}
where $m_{1,1} \in \mathbb{R}$, $\mathbf{M}_{1,2} \in \mathbb{R}^{1 \times (K-1)}$, $\mathbf{M}_{2,1} \in \mathbb{R}^{(K-1) \times 1}$ and $\mathbf{M}_{2,2} \in \mathbb{R}^{(K-1) \times (K-1)}$. 
Assuming $\mathbf{M}$ is symmetric,  $\mathbf{M}_{1,2} = \mathbf{M}_{2,1}^{\top}$.     
We optimize $\mathbf{M}_{2,1}$ in one iteration, \ie,
\begin{align}
\min_{\M_{2,1}} ~ Q(\M), ~~~
\mbox{s.t.} ~~ \M \in \cS
\label{eq:optimize_off}
\end{align}
In the next iteration, a different row / column $i$ is selected, and with appropriate row / column permutation, we still optimize the first column off-diagonal terms $\M_{2,1}$ as in \eqref{eq:optimize_off}.

Note that the constraint $\mathrm{tr}(\M) \leq C$ in \eqref{eq:optimize_M} can be ignored, since it does not involve optimization variable $\M_{2,1}$.
For $\M$ to remain in the set $\cS$ of graph metric matrices, i) $\M$ must be PD, ii) $\M$ must be irreducible, and iii) $\M_{2,1} \leq \0$. 

As done for the diagonal terms optimization, we replace the PD constraint with Gershgorin-based linear constraints. 
To ensure irreducibility (\ie, the graph remains connected), we ensure that \textit{at least one} off-diagonal term (say index $\zeta$) in column 1 has magnitude at least $\epsilon > 0$. 
The optimization thus becomes:
\begin{align}
\min_{\M_{2,1}} ~& Q(\M) 
\label{eq:optimize_off2} \\
\mbox{s.t.} ~~& m_{i,i} \geq s_i \sum_{j\,|\,j\neq i} \frac{|m_{i,j}|}{s_j} + \rho, ~~\forall i \nonumber \\
& m_{\zeta,1} \leq -\epsilon; ~~~
\M_{2,1} \leq \0 \nonumber
\end{align}
Essentially any selection of $\zeta$ in \eqref{eq:optimize_off2} can ensure $\M$ is irreducible. 
To encourage solution convergence, we select $\zeta$ as the index of the previously optimized $\M_{2,1}^t$ with the largest magnitude. 

%Like the diagonal terms optimization, 
\eqref{eq:optimize_off2} also has a convex differentiable objective with a set of linear constraints. 
We thus employ the Frank-Wolfe algorithm again to iteratively linearize the objective using gradient $\nabla Q(\M^t)$ with respect to off-diagonal $\M_{2,1}$, where the solution in each iteration is solved as an LP. 
We omit the details for brevity. 

%In the next iteration, a different row and column is selected, and with appropriate rows and columns reordering, the optimization variable $\mathbf{M}_{2,1}$ can still reside in the first row and column as shown in \eqref{eq:submatrix}.
%In the first iteration, we initialize $\M$ to be a diagonal matrix with positive diagonal entries.

%\subsection{Complexity Analysis} 

%We analyze our proposed framework to solve \eqref{eq:optimize_M} as follows.

\section{Experiments}
\label{sec:results}
%\subsection{Experimental Setup}

We evaluate our proposed metric learning method in classification performance.
Specifically, the objective function $Q(\M)$ we consider here is the \textit{graph Laplacian Regularizer} (GLR) \cite{shuman13spm,pang2017graph}:
\begin{align}
&	Q(\M) = \vec{z}^{\top} \mathbf{L}(\M) \vec{z} =\sum_{i=1}^{N} \sum_{j=1}^{N} w_{i,j}(z_i - z_j)^2 
	\nonumber \\
	= & \sum_{i=1}^{N} \sum_{j=1}^{N} \exp \left\{ -(\f_i-\f_j)^{\top} \M (\f_i - \f_j) \right\} (z_i - z_j)^2.
	\label{eq:GLR}
\end{align}
A small GLR means that signal $\z$ at connected node pairs $(z_i, z_j)$ are similar for a large edge weight $ w_{i,j} $, \ie, $z$ is \textit{smooth} with respect to the variation operator $\L(\M)$. 
GLR has been used in the GSP literature to solve a range of inverse problems, including image denoising \cite{pang2017graph}, deblurring \cite{bai18}, dequantization amd contrast enhancement \cite{liu19contrast}, and soft decoding of JPEG \cite{liu2016random}.

%\subsection{Experiments on Classification}

We evaluate our method with the following competing schemes: three metric learning methods that only learn the diagonals of $\mathbf{M}$, \textit{i.e.}, \cite{Zhu:2003:SLU:3041838.3041953}, \cite{maoapsipa16}, and \cite{yang2018apsipa}, and two methods that learn the full matrix $\mathbf{M}$, \textit{i.e.}, \cite{zadeh16GMML} and \cite{DBLP:journals/corr/abs-1907-09138}. 
We perform classification tasks using one of the following two classifiers: 1) a k-nearest-neighbour classifier, and 2) a graph-based classifier with quadratic formulation 
% \begin{align}
% \min_{\z} & ~~ \z^\top \L(\M) \z 	   \nonumber \\
% \mbox{s.t.} 
% & ~~ z_i = \hat{z}_i, ~~ i \in \mathcal{F}, ~~ \mathcal{F} \subset \left\{ 1, \ldots, J \right\},
% \label{eq:twoWay}
% \end{align}
$\min_{\z} \z^\top \L(\M) \z \nonumber ~ \mbox{s.t.} ~ z_i = \hat{z}_i, i \in \mathcal{F}, \mathcal{F} \subset \left \{ 1, \ldots, J \right\}$, 
where $\hat{z}_i$ in subset $\mathcal{F}$ are the observed labels. 
We evaluate all classifiers on \texttt{wine} (3 classes, 13 features and 178 samples), \texttt{iris} (3 classes, 4 features and 150 samples), \texttt{seeds} (3 classes, 7 features and 210 samples), and \texttt{pb} (2 classes, 10 features and 300 samples). 
All experiments were performed in Matlab R2017a on an i5-7500, 8GB of RAM, Windows 10 PC. 
We perform 2-fold cross validation 50 times using 50 random seeds (0 to 49) with one-against-all classification strategy. 
As shown in Table \ref{table:classification}, our proposed metric learning method has the lowest classification error rates with a graph-based classifier.

% Given the true positive (TP), false positive (FP), true negative (TN) and false negative (FN) samples, we use the following metrics for evaluation: \\
% i) $\mbox{(P)recision} = \mbox{TP}/\{\mbox{TP}+\mbox{FP}\}$; \\
% ii) $\mbox{(R)ecall} = \mbox{TP}/\{\mbox{TP}+\mbox{FN}\}$; \\
% iii) $\mbox{per class F-measure} = 2\mbox{P}\mbox{R}/\{\mbox{P}+\mbox{R}\}$; \\
% iv) $\mbox{macro-avg. precision (MAP)} = \{\mbox{total P}\}/\mbox{\{total classes\}} $; \\
% v) $\mbox{macro-avg. recall (MAR)} = \{\mbox{total R}\}/\mbox{\{total classes\}}$; \\
% vi) $\mbox{error rate (ER)} = 1 - \mbox{\{total TP\}}/\mbox{\{total samples\}} $.

%In Table \ref{table:iris}, since neither $\mbox{TP}$ nor $\mbox{FP}$ are available for Class 3 for SVM-RBF and SVM-MLP, we denote the result as `-'. 

\begin{table}[htb]
	\centering
	\caption{Classification error rates. (GB=Graph-based classifier.)}
	%\vspace{-0.1in}
\label{table:classification}
\begin{scriptsize}
\begin{tabular}{|c|c|c|c|c|c|c|c|c|} \hline
\multirow{2}{*}{methods} & \multicolumn{2}{c|}{\texttt{iris}} & \multicolumn{2}{c|}{\texttt{wine}} & \multicolumn{2}{c|}{\texttt{seeds}} & \multicolumn{2}{c|}{\texttt{pb}}\\ \cline{2-9}
& {kNN} & {GB}  & {kNN} & {GB} & {kNN} & {GB} & {kNN} & {GB}\\ \hline
\cite{Zhu:2003:SLU:3041838.3041953} & 4.61  & 4.41 & 3.84 & 4.88  & 7.30 & 7.20 & - & - \\
\cite{maoapsipa16}  & 4.97 & 4.57 & 4.61 & 5.18 & 7.15& 6.93 & 4.46 & 5.04 \\
\cite{yang2018apsipa}  & 5.45 & 5.49 & 4.35  & 4.96 & 7.78 & 7.40 & 5.33 & 4.51\\ \hline
\cite{zadeh16GMML} & 6.12 & 10.40 & \textbf{3.58} & 4.37 & \textbf{6.92} &  6.63 & 4.55 & 4.96\\
\cite{DBLP:journals/corr/abs-1907-09138}    & \textbf{4.35} & 4.80 & 4.12 &  4.36 & 7.77& 7.47 & \textbf{4.44} & 4.24 \\
\textbf{Prop.}  & \textbf{4.35} & \textbf{4.12} & 4.27 & \textbf{4.19} & 7.10 & \textbf{6.61} & 4.8 & \textbf{4.23}\\ \hline
\end{tabular}
\end{scriptsize}
\end{table}

\vfill\pagebreak

% \section{REFERENCES}
% \label{sec:refs}

% List and number all bibliographical references at the end of the
% paper. The references can be numbered in alphabetic order or in
% order of appearance in the document. When referring to them in
% the text, type the corresponding reference number in square
% brackets as shown at the end of this sentence \cite{C2}. An
% additional final page (the fifth page, in most cases) is
% allowed, but must contain only references to the prior
% literature.

% References should be produced using the bibtex program from suitable
% BiBTeX files (here: strings, refs, manuals). The IEEEbib.bst bibliography
% style file from IEEE produces unsorted bibliography list.
% -------------------------------------------------------------------------
\bibliographystyle{IEEEtran}
\bibliography{ref}

\end{document}